\renewcommand{\cf}{\textit{cf}\onedot}
\newcommand{\semsim}{s_\mathcal{G}}
\newcommand{\semdist}{d_\mathcal{G}}
\DeclareMathOperator{\lcs}{lcs}
\DeclareMathOperator{\treeheight}{height}
\DeclareMathOperator{\rootnode}{root}
\newtheorem{theorem}{Theorem}
\Crefname{appsec}{Appendix}{Appendices}
\crefname{appsec}{Appendix}{Appendices}
\definecolor{fsuBlue}{RGB}{135,178,231}
\ifwacvfinal\pagestyle{empty}\fi
\begin{document}
    
\title{Hierarchy-based Image Embeddings for Semantic Image Retrieval}

\author{Bj{\"o}rn Barz \hspace{3cm} Joachim Denzler \\
     \\
    Friedrich Schiller University Jena\\
    Computer Vision Group\\
    {\tt\small bjoern.barz@uni-jena.de}
}

\maketitle
\ifwacvfinal\thispagestyle{empty}\fi

\begin{textblock*}{\textwidth}(18mm,12mm)
    \textblockcolour{fsuBlue}
    \vspace{2mm}
    \tiny
    \centering
    Bj{\"o}rn Barz and Joachim Denzler.\\
    ``Hierarchy-based Image Embeddings for Semantic Image Retrieval.''\\
    \textit{IEEE Winter Conference on Applications of Computer Vision (WACV) 2019.}\\
    \copyright\ 2019 IEEE. Personal use of this material is permitted.
    Permission from IEEE must be obtained for all other uses, in any current or future media, including reprinting/republishing this material for advertising or promotional purposes, creating new collective works, for resale or redistribution to servers or lists, or reuse of any copyrighted component of this work in other works.
    The final publication will be available at
    \href{https://ieeexplore.ieee.org/abstract/document/8658633}{ieeexplore.ieee.org}.\\
    \vspace{2mm}
\end{textblock*}

\begin{abstract}
Deep neural networks trained for classification have been found to learn powerful image representations, which are also often used for other tasks such as comparing images w.r.t.\ their visual similarity.
However, visual similarity does not imply semantic similarity.
In order to learn semantically discriminative features, we propose to map images onto class embeddings whose pair-wise dot products correspond to a measure of semantic similarity between classes.
Such an embedding does not only improve image retrieval results, but could also facilitate integrating semantics for other tasks, e.g., novelty detection or few-shot learning.
We introduce a deterministic algorithm for computing the class centroids directly based on prior world-knowledge encoded in a hierarchy of classes such as WordNet.
Experiments on CIFAR-100, NABirds, and ImageNet show that our learned semantic image embeddings improve the semantic consistency of image retrieval results by a large margin.

\textbf{Source code:} \url{https://github.com/cvjena/semantic-embeddings}
\end{abstract}

\section{Introduction}
\label{sec:introduction}

\begin{figure*}[t]
    \includegraphics[width=\linewidth]{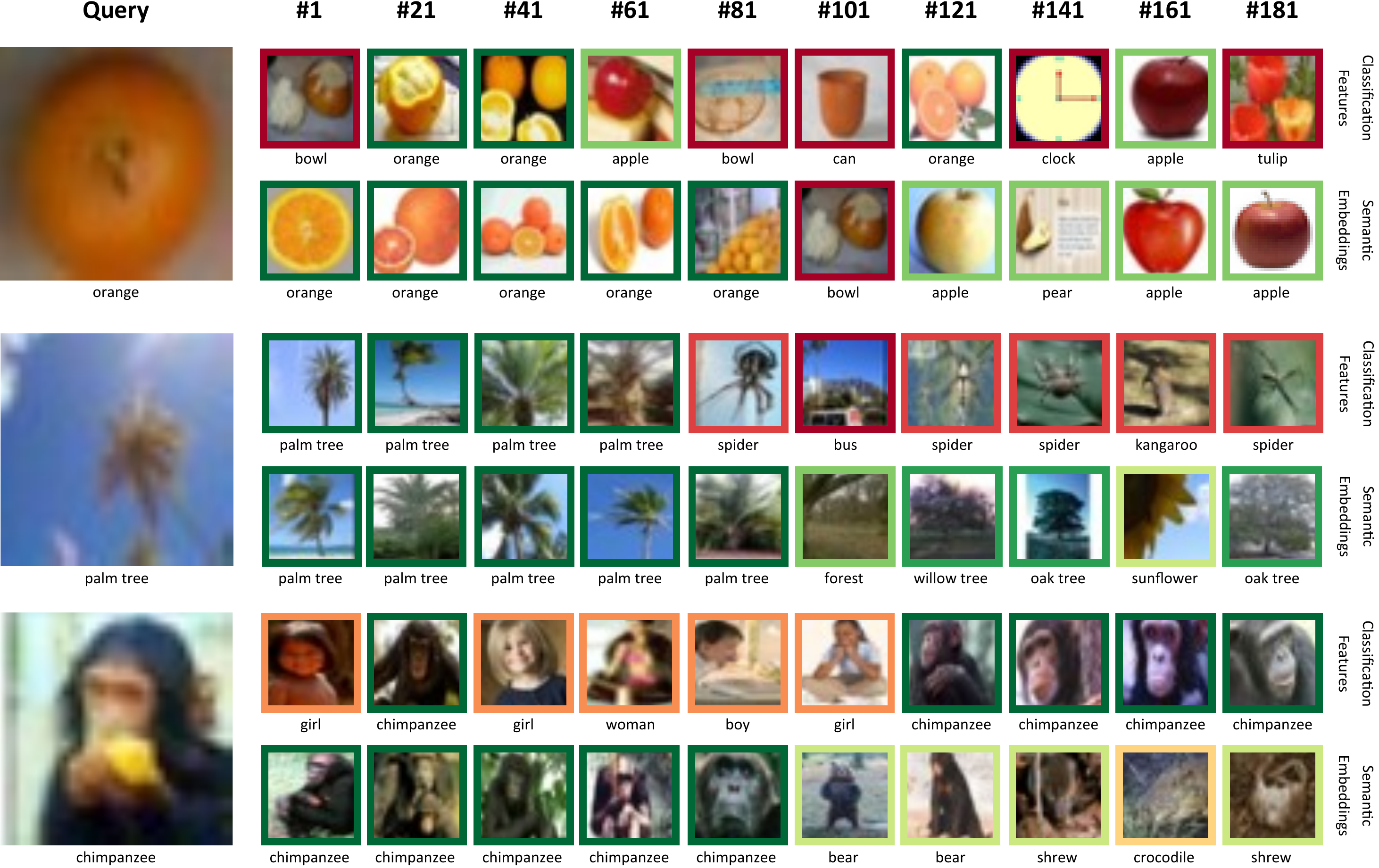}%
    \caption{Comparison of image retrieval results on CIFAR-100 \cite{krizhevsky2009cifar} for 3 exemplary queries using features extracted from a variant of ResNet-110 \cite{he2016resnet} trained for classification and semantic embeddings learned by our method. The border colors of the retrieved images correspond to the semantic similarity between their class and the class of the query image (with dark green being most similar and dark red being most dissimilar). It can be seen that hierarchy-based semantic image embeddings lead to much more semantically consistent retrieval results.}
    \label{fig:retrieval_comparison}
\end{figure*}

\noindent
During the past few years, deep convolutional neural networks (CNNs) have continuously advanced the state-of-the-art in image classification \cite{krizhevsky2012alexnet,simonyan2014vgg,he2016resnet,huang2016densenet}
and many other tasks.
The intermediate image representations learned by such CNNs trained for classification have also proven to be powerful image descriptors for retrieving images from a database that are visually or semantically similar to one or more query images given by the user \cite{babenko2014neural,sharif2014cnn,babenko2015aggregating}.
This task is called content-based image retrieval (CBIR) \cite{smeulders2000cbir}.

Usually, the categorical cross-entropy loss after a softmax activation is used for training CNN classifiers, which results in well-separable features.
However, these features are not necessarily discriminative, \ie, the inter-class variance of the learned representations may be small compared to the intra-class variance \cite{wen2016centerloss}.
The average distance between the classes ``cat'' and ``dog'' in feature space may well be equally large as the distance between ``cat'' and ``forest''.
For image retrieval, that situation is far from optimal, since the nearest neighbors of a certain image in feature space may belong to completely different classes.
This can, for instance, be observed in the upper row of each example in \cref{fig:retrieval_comparison}, where we used features extracted from the global average pooling layer of ResNet-110 \cite{he2016resnet} for image retrieval on the CIFAR-100 dataset \cite{krizhevsky2009cifar}.
While the first results often belong to the same class as the query, the semantic relatedness of the results at later positions deteriorates significantly.
Those seem to be mainly visually similar to the query with respect to shape and color, but not semantically.

Many authors have therefore proposed a variety of metric learning losses, aiming to increase the separation of classes, while minimizing the distances between samples from the same class.
Popular instances of this branch of research are the contrastive loss \cite{chopra2005contrastiveloss}, the triplet loss \cite{schroff2015facenet}, and the quadruplet loss \cite{chen2017quadrupletloss}.
However, these methods require sampling of hard pairs, triplets, or even quadruplets of images, making training cumbersome and expensive.
On the other hand, they still do not impose any constraints on inter-class relationships:
Though images of the same class should be tightly clustered together in feature space, neighboring clusters may still be completely unrelated.

This harms the semantic consistency of image retrieval results.
Consider, for example, a query image showing a poodle.
It is unclear, whether the user is searching for images of other poodles only, for images of any other dog breed, or even for images of animals in general.
Current CBIR methods would---ideally---first retrieve all images of the same class as the query but then continue with mostly unrelated, only \textit{visually} similar images from other classes.

In this work, we propose a method for learning \textit{semantically} meaningful image representations, so that the Euclidean distance in the feature space directly corresponds to the semantic dissimilarity of the classes.
Since ``semantic similarity'' is, in principle, arbitrary, we rely on prior knowledge about the physical world encoded in a hierarchy of classes.
Such hierarchies are readily available in many domains, since various disciplines have been striving towards organizing the entities of the world in ontologies for years.
WordNet \cite{fellbaum1998wordnet}, for example, is well-known for its good coverage of the world with over 80,000 semantic concepts and the Wikispecies project\footnote{\url{https://species.wikimedia.org/}} provides a taxonomy of living things comprising more than half a million nodes.

We make use of the knowledge explicitly encoded in such hierarchies to derive a measure of semantic similarity between classes and introduce an algorithm for explicitly computing target locations for all classes on a unit hypersphere, so that the pair-wise dot products of their embeddings equal their semantic similarities.
We then learn a transformation from the space of color images to this semantically meaningful feature space, so that the correlation between image features and their class-specific target embedding is maximized.
This can easily be done using the negated dot product as a loss function, without having to consider pairs or triplets of images or hard-negative mining.

Exemplary retrieval results of our system are shown in the bottom rows of each example in \cref{fig:retrieval_comparison}.
It can be seen that our semantic class embeddings lead to image features that are much more invariant against superficial visual differences:
An orange is semantically more similar to a green apple than to orange tulips and a palm tree is more similar to an oak than to a spider.
The results follow the desired scheme described above: For a query image showing an orange, all oranges are retrieved first, then all apples, pears, and other fruits.
As a last example, incorporating semantic information about classes successfully helps avoiding the not so uncommon mistake of confusing humans with apes.

We discuss related work on learning semantic image representations in \cref{sec:related-work}. Our approach for computing class embeddings based on a class hierarchy is presented in \cref{sec:class-embeddings}, and \cref{sec:learning-embeddings} explains how we learn to map images onto those embeddings. Experimental results on three popular datasets are presented in \cref{sec:experiments}, and \cref{sec:conclusions} concludes this work.

\section{Related Work}
\label{sec:related-work}

\noindent
Since the release of the ImageNet dataset \cite{deng2009imagenet} in 2009, many authors have proposed to leverage the WordNet ontology, which the classes of ImageNet have been derived from, for improving classification and image retrieval:
The creators of ImageNet, Deng et al.~\cite{deng2011hierarchical}, derived a deterministic bilinear similarity measure from the taxonomy of classes for comparing image feature vectors composed of class-wise probability predictions. This way, images assigned to different classes can still be considered as similar if the two classes are similar to each other.

Regarding classification, Zhao et al.~\cite{zhao2011large} modify multinomial logistic regression to take the class structure into account, using the dissimilarity of classes as misclassification cost.
Verma et al.~\cite{verma2012learning}, on the other hand, learn a specialized Mahalanobis distance metric for each node in a class hierarchy and combine them along the paths from the root to the leafs. This results in different metrics for all classes and only allows for nearest-neighbor-based classification methods.
In contrast, Chang et al.~\cite{chang2015large} learn a global Mahalanobis metric on the space of class-wise probability predictions, where they enforce margins between classes proportional to their semantic dissimilarities.
HD-CNN \cite{yan2015hd} follows an end-to-end learning approach by dividing a CNN into a coarse and several fine classifiers based on two levels of a class hierarchy, fusing predictions at the end.

All these approaches exploit the class structure for classification or retrieval, but only at the classifier level instead of the features themselves.
Our approach, in contrast, embeds images into a semantically meaningful space where the dot product corresponds to the similarity of classes.
This does not only make semantic image retrieval straightforward, but also enables the application of a wide variety of existing methods that rely on metric feature spaces, \eg, clustering or integration of relevance feedback into the retrieval \cite{deselaers2008learning}.

A very similar approach has been taken by Weinberger et al.~\cite{weinberger2009taxem}, who propose ``taxonomy embeddings'' (``taxem'') for categorization of documents.
However, they do not specify how they obtain semantic class similarities from the taxonomy.
Moreover, they learn a linear transformation from hand-crafted document features onto the class embedding space using ridge regression, whereas we perform end-to-end learning using neural networks.

More recently, several authors proposed to jointly learn embeddings of classes and images based solely on visual information, \eg, using the ``center loss'' \cite{wen2016centerloss} or a ``label embedding network'' \cite{sun2017label}.
However, semantics are often too complex for being derived from visual information only.
For example, the label embedding network \cite{sun2017label} learned that pears are similar to bottles, because their shape and color is often similar and the image information alone is not sufficient for learning that fruits and man-made containers are fundamentally different concepts.

To avoid such issues, Frome et al. (``DeViSE'' \cite{frome2013devise}) and Li et al.~\cite{li2017learning} propose to incorporate prior world-knowledge by mapping images onto word embeddings of class labels learned from text corpora \cite{mikolov2013distributed,pennington2014glove}.
To this end, Frome et al.~\cite{frome2013devise} need to pre-train their image embeddings for classification initially, and Li et al.~\cite{li2017learning} first perform region mining and then use three sophisticated loss functions, requiring the mining of either hard pairs or triplets.

In contrast to this expensive training procedure relying on the additional input of huge text corpora, we show how to explicitly construct class embeddings based on prior knowledge encoded in an easily obtainable hierarchy of classes, without the need to learn such embeddings approximately.
These embeddings also allow for straightforward learning of image representations by simply maximizing the dot product of image and class embeddings.

A broader overview of research aiming for incorporating prior knowledge into deep learning of image representations can, for instance, be found in \cite{setti2018knowandlearn}.

\section{Hierarchy-based Class Embeddings}
\label{sec:class-embeddings}

\noindent
In the following, we first describe how we measure semantic similarity between classes based on a hierarchy and then introduce our method for explicitly computing class embeddings based on those pair-wise class similarities.

\subsection{Measuring Semantic Similarity}
\label{subsec:semantic-similarity}

\noindent
Let $\mathcal{G} = (V, E)$ be a directed acyclic graph with nodes $V$ and edges $E \subseteq V \times V$, specifying the hyponymy relation between semantic concepts.
In other words, an edge $(u, v) \in E$ means that $v$ is a sub-class of $u$.
The actual classes of interest $\mathcal{C} = \{c_1, \dotsc, c_n\} \subseteq V$ are a subset of the semantic concepts. 
An example for such a graph, with the special property of being a tree, is given in \cref{subfig:toy-hierarchy}.

A commonly used measure for the dissimilarity $\semdist: \mathcal{C} \times \mathcal{C} \rightarrow \mathbb{R}$ of classes organized in this way is the height of the sub-tree rooted at the \textit{lowest common subsumer (LCS)} of two classes, divided by the height of the hierarchy \cite{deng2011hierarchical,verma2012learning}:
\begin{equation}
    \label{eq:lcs-height}
    \semdist(u, v) = \frac{\treeheight(\lcs(u,v))}{\max_{w \in V} \treeheight(w)} \;,
\end{equation}
where the height of a node is defined as the length of the longest path from that node to a leaf.
The LCS of two nodes is the ancestor of both nodes that does not have any child being an ancestor of both nodes as well.

Since $\semdist$ is bounded between 0 and 1, we can easily derive a measure for semantic similarity between semantic concepts as well:
\begin{equation}
    \label{eq:class-dissimilarity}
    \semsim(u, v) = 1 - \semdist(u, v) \;.
\end{equation}

For example, the toy hierarchy in \cref{subfig:toy-hierarchy} has a total height of 3, the LCS of the classes ``dog'' and ``cat'' is ``mammal'' and the LCS of ``dog'' and ``trout'' is ``animal''. It follows that $\semdist(\mathrm{``dog``}, \mathrm{``cat``}) = \frac{1}{3}$ and $\semdist(\mathrm{``dog``}, \mathrm{``trout``}) = \frac{2}{3}$.

Note that though $\semdist$ is symmetric and non-negative, it is only guaranteed to be a proper metric if $\mathcal{G}$ is a tree with all classes of interest being leaf nodes (the proof can be found in \cref{app:metric-proof}).
Some well-known ontologies such as WordNet \cite{fellbaum1998wordnet} do not have this property and violate the triangle inequality.
For instance, the WordNet synset ``golfcart'' is a hypernym of both ``vehicle'' and ``golf equipment''. It is hence similar to cars and golf balls, while both are not similar to each other at all.

Our goal is to embed classes onto a unit hypersphere so that their dot product corresponds to $\semsim$.
Thus, the Euclidean distance between such embeddings of the classes equals $\sqrt{2 \cdot \semdist}$, which hence has to be a metric.
Therefore, we assume the hierarchy $\mathcal{G}$ to be given as tree in the following.
In case that this assumption does not hold, approaches from the literature for deriving tree-shaped hierarchies from ontologies such as WordNet could be employed.
For instance, YOLO-9000 \cite{redmon2016yolo9000} starts with a tree consisting of the root-paths of all concepts which have only one such path and then successively adds paths to other concepts that result in the least number of nodes added to the existing tree.

\begin{figure}[t]
    \begin{subfigure}{.48\linewidth}%
        \centering
        \includegraphics[height=3.2cm]{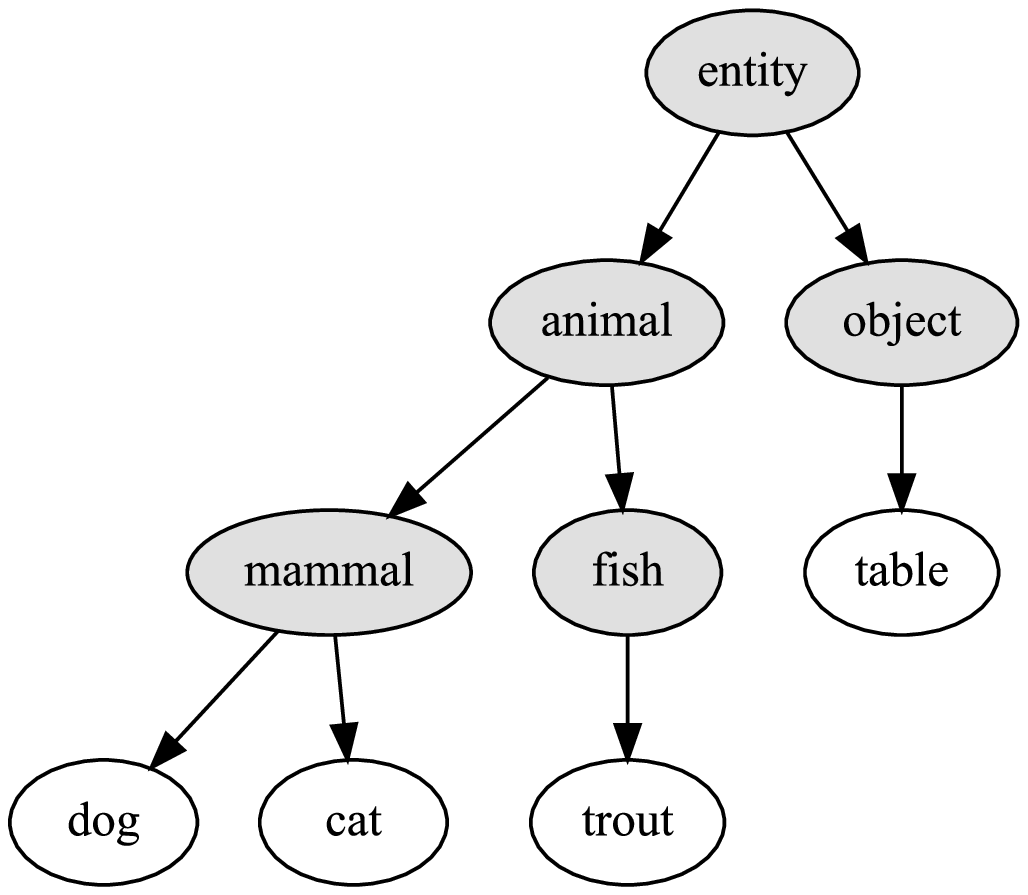}%
        \caption{}%
        \label{subfig:toy-hierarchy}%
    \end{subfigure}%
    \hfill%
    \begin{subfigure}{.48\linewidth}%
        \centering
        \includegraphics[height=3.2cm]{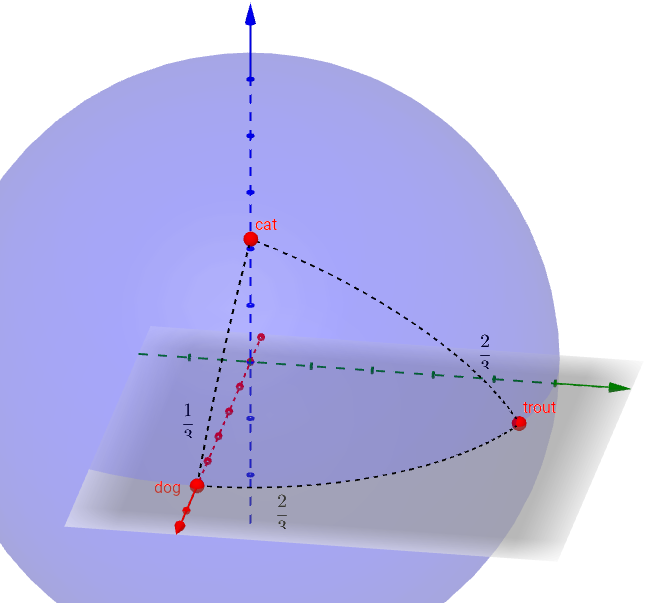}%
        \caption{}%
        \label{subfig:toy-embedding-3}%
    \end{subfigure}
    \caption{\subref{subfig:toy-hierarchy} A toy hierarchy and \subref{subfig:toy-embedding-3} an embedding of 3 classes from this hierarchy with their pair-wise $\semdist$.}
    \label{fig:toy-example}
\end{figure}

\subsection{Class Embedding Algorithm}
\label{subsec:embedding-algorithm}

\noindent
Consider $n$ classes $\mathcal{C} = \{c_1, \dotsc, c_{n}\}$ embedded in a hierarchy $\mathcal{G}$ as above.
Our aim is to compute embeddings $\varphi(c_i) \in \mathbb{R}^n$ of all classes $c_i$, $i=1,\dotsc,n$, so that
\begin{eqnarray}
    \label{eq:embedding-constraint}
    \forall_{1 \leq i, j \leq n}: && \varphi(c_i)^\top \varphi(c_j) = \semsim(c_i, c_j) \;, \\
    \label{eq:embedding-constraint-norm}
    \forall_{1 \leq i \leq n}: && \|\varphi(c_i)\| = 1 \;.
\end{eqnarray}
In other words, the correlation of class embeddings should equal the semantic similarity of the classes and all embeddings should be L2-normalized.
Eq. \eqref{eq:embedding-constraint-norm} actually is a direct consequence of \eqref{eq:embedding-constraint} in combination with the fact that $\semsim(u,u)=1$ for any class $u \in \mathcal{C}$, but we formulate it as an explicit constraint here for clarity, emphasizing that all class embeddings lie on a unit hypersphere.
This does not only allow us to use the negated dot product as a substitute for the Euclidean distance, but also accounts for the fact that L2 normalization has proven beneficial for CBIR in general, because the direction of high-dimensional feature vectors often carries more information than their magnitude \cite{jegou2014triangulation,husain2017rvd,horiguchi2017significance}.

\begin{algorithm}[t]
    \caption{Hierarchy-based Class Embeddings}
    \label{algo:class-embeddings}
    \begin{algorithmic}[1]
        
        \REQUIRE hierarchy $\mathcal{G} = (V, E), E \subseteq (V \times V)$, set $\mathcal{C} = (c_1, \dotsc, c_n) \subseteq V$ of $n$ classes
        \ENSURE embeddings $\varphi(c_i) \in \mathbb{R}^n$ for $i = 1, \dotsc, n$
        
        \STATE $\varphi(c_1) \leftarrow [1, 0, \dotsc, 0]^\top$
        \FOR{$i=2$ \TO $n$}
        \STATE $\hat{\varphi}(c_i) \leftarrow$ solution of \cref{eq:embedding-eqs}
        \STATE $x \leftarrow$ maximum of the solutions of \cref{eq:embedding-normalization}
        \STATE $\varphi(c_i) \leftarrow [\hat{\varphi}(c_i)^\top, x, 0, \dotsc, 0]^\top$
        \ENDFOR
        
    \end{algorithmic}
\end{algorithm}

We follow a step-wise approach for computing the embeddings $\varphi(c_i), i=1,\dotsc,n$, as outlined in \cref{algo:class-embeddings}:
We can always choose an arbitrary point on the unit sphere in an $n$-dimensional space as embedding for the first class $c_1$, because the constraint \eqref{eq:embedding-constraint} is invariant against arbitrary rotations and translations.
Here, we choose $\varphi(c_1) = [1, 0, \dotsc, 0]^\top$, which will simplify further calculations.

The remaining classes $c_i$, $i=2,\dotsc,n$, are then placed successively so that each new embedding has the correct dot product with any other already embedded class:
\begin{equation}
    \label{eq:embedding-eqs}
    \forall_{1 \leq j < i}: \varphi(c_j)^\top \varphi(c_i) = \semsim(c_j, c_i) \;.
\end{equation}

This is a system of $i-1$ linear equations, where $\varphi(c_i)$ is the vector of unknown variables.
As per our construction, only the first $j$ dimensions of all $\varphi(c_j)$, $j=1,\dotsc,i-1$, are non-zero, so that the effective number of free variables is $i-1$.
The system is hence well-determined and in lower-triangular form, so that it has a unique solution that can be computed efficiently with $\mathcal{O}(i^2)$ floating-point operations using forward substitution.

This solution $\hat{\varphi}(c_i) = [\varphi(c_i)_1, \dotsc, \varphi(c_i)_{i-1}]^\top$ for the first $i-1$ coordinates of $\varphi(c_i)$ already fulfills \eqref{eq:embedding-constraint}, but not \eqref{eq:embedding-constraint-norm}, \ie, it is not L2-normalized.
Thus, we use an additional dimension to achieve normalization:
\begin{equation}
    \label{eq:embedding-normalization}
    \varphi(c_i)_i = \sqrt{1 - \|\hat{\varphi}(c_i)\|^2} \;.
\end{equation}
Without loss of generality, we always choose the non-negative solution of this equation, so that all class embeddings lie entirely in the positive orthant of the feature space.

Due to this construction, exactly $n$ feature dimensions are required for computing a hierarchy-based embedding of $n$ classes.
An example of such an embedding for 3 classes is given in \cref{subfig:toy-embedding-3}.
The overall complexity of the algorithm outlined in \cref{algo:class-embeddings} is $\mathcal{O}(n^3)$.

\subsection{Low-dimensional Approximation}
\label{subsec:low-dimensional-embeddings}

\noindent
In settings with several thousands of classes, the number of features required by our algorithm from \cref{subsec:embedding-algorithm} might become infeasible.
However, it is possible to obtain class embeddings of arbitrary dimensionality whose pair-wise dot products best \textit{approximate} the class similarities.

Let $\mathcal{S} \in \mathbb{R}^{n \times n}$ be the matrix of pair-wise class similarities, \ie, $S_{ij} = \semsim(c_i, c_j), 1 \leq i,j \leq n,$ and $\Phi \in \mathbb{R}^{n \times n}$ be a matrix whose $i$-th row is the embedding $\varphi(c_i)$ for class $c_i$.
Then, we can reformulate \eqref{eq:embedding-constraint} as
\begin{equation}
    \label{eq:embedding-eigendecomposition}
    \Phi \cdot \Phi^\top = \mathcal{S} = Q \Lambda Q^\top \;,
\end{equation}
where $Q \in \mathbb{R}^{n \times n}$ is a matrix whose rows contain the eigenvectors of $\mathcal{S}$ and $\Lambda \in \mathbb{R}^{n \times n}$ is a diagonal matrix containing the corresponding eigenvalues.

Thus, we could also use the eigendecomposition of $\mathcal{S}$ to obtain the class embeddings as $\Phi = Q \Lambda^{1/2}$.
However, we have found our algorithm presented in \cref{subsec:embedding-algorithm} to provide better numerical accuracy, resulting in a maximum error of pairwise distances of $1.7 \times 10^{-15}$ for the 1000 classes of ILSVRC \cite{ILSVRC15}, while the eigendecomposition only obtains $1.7 \times 10^{-13}$.
Moreover, eigendecomposition does not guarantee all class embeddings to lie in the positive orthant of the feature space.
However, we have found this to be a beneficial regularization in practice, resulting in slightly better performance.

On the other hand, when dealing with a large number of classes, the eigendecomposition can be useful to obtain a low-dimensional embedding that does not reconstruct $\mathcal{S}$ exactly but approximates it as best as possible by keeping only the eigenvectors corresponding to the largest eigenvalues.
However, the resulting class embeddings will not be L2-normalized any more.
Experiments in \cref{app:low-dimensional} show that our method can still provide superior retrieval performance even with very low-dimensional embeddings, which is also advantageous regarding memory consumption when dealing with large datasets.

\section{Mapping Images onto Class Centroids}
\label{sec:learning-embeddings}

\noindent
Knowing the target embeddings for all classes, we need to find a transformation $\psi: \mathfrak{X} \rightarrow \mathbb{R}^n$ from the space $\mathfrak{X}$ of images into the hierarchy-based semantic embedding space, so that image features are close to the centroid of their class.

For modeling $\psi$, we employ convolutional neural networks (CNNs) whose last layer has $n$ output channels and no activation function.
Since we embed all classes onto the unit hypersphere, the last layer is followed by L2 normalization of the feature vectors.
The network is trained on batches $B \in (\mathfrak{X} \times \mathcal{Y})^m$ of $m$ images $I_b \in \mathfrak{X}$ with class labels $y_b \in \mathcal{Y} = \{1,\dotsc,n\}$, $b = 1, \dotsc, m$, under the supervision of a simple loss function $\mathcal{L}_\mathrm{CORR}: (\mathfrak{X} \times \mathcal{Y})^m \rightarrow \mathbb{R}$ that enforces similarity between the learned image representations and the semantic embedding of their class:
\begin{equation}
    \label{eq:corr-loss}
    \mathcal{L}_\mathrm{CORR}(B) =
    \frac{1}{m} \sum_{b=1}^m \left( 1 - \psi(I_b)^\top \varphi(c_{y_b}) \right) \;.
\end{equation}

Note that the class centroids $\varphi(c_1), \dotsc, \varphi(c_n)$ are computed beforehand using the algorithm described in \cref{sec:class-embeddings} and fixed during the training of the network.
This allows our loss function to be used stand-alone, as opposed to, for example, the center loss \cite{wen2016centerloss}, which requires additional supervision.
Moreover, the loss caused by each sample is independent from all other samples, so that expensive mining of hard pairs \cite{chopra2005contrastiveloss} or triplets \cite{schroff2015facenet} of samples is not necessary.

Features learned this way may not only be used for retrieval but also for classification by assigning a sample to the class whose embedding is closest in the feature space.
However, one might as well add a fully-connected layer with softmax activation on top of the embedding layer, producing output $\rho: \mathfrak{X} \rightarrow [0,1]^{n}$.
The network could then simultaneously be trained for computing semantic image embeddings and classifying images using a combined loss
\begin{equation}
    \label{eq:embed-cls-loss}
    \mathcal{L}_\mathrm{CORR+CLS} = \mathcal{L}_\mathrm{CORR} + \lambda \cdot \mathcal{L}_\mathrm{CLS} \;,
\end{equation}
where $\mathcal{L}_\mathrm{CLS}: (\mathfrak{X} \times \mathcal{Y})^m \rightarrow \mathbb{R}$ denotes the categorical cross-entropy loss function
\begin{equation}
    \label{eq:crossent-loss}
    \mathcal{L}_\mathrm{CLS}(B) =
    \frac{1}{m} \sum_{b=1}^m \log(\rho(I_b)_{y_b}) \;.
\end{equation}

Since we would like the embedding loss $\mathcal{L}_\mathrm{CORR}$ to dominate the learning of image representations, we set $\lambda$ to a small value of $\lambda = 0.1$ in our experiments.

\section{Experiments}
\label{sec:experiments}

\noindent
In the following, we present results on three different datasets and compare our method for learning semantic image representations with features learned by other methods that try to take relationships among classes into account.

\subsection{Datasets and Setups}
\label{subsec:datasets}

\subsubsection{CIFAR-100}
\label{par:cifar100}

The extensively benchmarked CIFAR-100 dataset \cite{krizhevsky2009cifar} consists of 100 classes with 500 training and 100 test images each.
In order to make our approach applicable to this dataset, we created a taxonomy for the set of classes, mainly based on the WordNet ontology \cite{fellbaum1998wordnet} but slightly simplified and with a strict tree structure.
A visualization of that taxonomy can be found in \cref{app:cifar-hierarchy}.

We test our approach with 3 different architectures designed for this dataset \cite{krizhevsky2009cifar}, while, in general, any neural network architecture can be used for mapping images onto the semantic space of class embeddings:
\begin{itemize}
    
    \item Plain-11 \cite{barz2018deep}, a recently proposed strictly sequential, shallow, wide, VGG-like network consisting of only 11 trainable layers, which has been found to achieve classification performance competitive to ResNet-110 when trained using cyclical learning rates.
    
    \item A variant of ResNet-110 \cite{he2016resnet}, a deep residual network with 110 trainable layers. We use 32, 64, and 128 instead of 16, 32, and 64 channels per block, so that the number of features before the final fully-connected layer is greater than the number of classes. To make this difference obvious, we refer to this architecture as ``ResNet-110w'' in the following (``w'' for ``wide'').
    
    \item PyramidNet-272-200 \cite{han2017pyramidnet}, a deep residual network whose number of channels increases with every layer and not just after pooling.
    
\end{itemize}

\noindent
Following \cite{barz2018deep}, we train these networks using SGD with warm restarts (SGDR \cite{loshchilov2016sgdr}), starting with a base learning rate of $0.1$ and smoothly decreasing it over 12 epochs to a minimum of $10^{-6}$ using cosine annealing.
The next cycle then begins with the base learning rate and the length of the cycles is doubled at the end of each one.
All architectures are trained over a total number of 5 cycles ($= 372$ epochs) using a batch-size of $100$.
To prevent divergence caused by the initially high learning rate, we use gradient clipping \cite{pascanu2013gradientclipping} and restrict the norm of gradients to a maximum of $10.0$.

\subsubsection{North American Birds}
\label{par:nab}

The North American Birds (NAB) dataset\footnote{\url{http://dl.allaboutbirds.org/nabirds}} comprises 23,929 training and 24,633 test images showing birds from 555 different species.
A 5 levels deep hierarchy of those classes is provided with the dataset.

Due to the comparatively small amount of training data, we do not only report results for training from scratch on the NAB dataset only, but also for fine-tuning models pre-trained on ILSVRC 2012 (see below).

During training, the images are randomly resized so that their smaller side is between 256 and 480 pixels wide and a random crop of size $224 \times 224$ pixels is extracted.
Moreover, we apply random horizontal flipping and random erasing \cite{zhong2017random} for data augmentation.

We use the ResNet-50 architecture and train it for 4 cycles of SGDR (180 epochs) as described above, using a base learning rate of $0.5$ and a batch size of 128 images.
Only in case of our method with the $\mathcal{L}_\mathrm{CORR}$ loss (\ie, without classifcation objective), one additional SGDR cycle was required to achieve convergence (\ie, 372 epochs).
For fine-tuning, we did not change the learning rate schedule for our semantic embeddings, but chose the number of cycles and the learning rate individually for all competitors to achieve good performance without overfitting.

\subsubsection{ILSVRC 2012}
\label{par:ilsvrc}

We also conduct experiments on data from the ImageNet Large Scale Visual Recognition Challenge (ILSVRC) 2012, which comprises over 1.1 million training and 50,000 test images from 1,000 classes.
The classes originate from the WordNet ontology.
Since this taxonomy is not a tree, we used the method of Redmon \& Farhadi \cite{redmon2016yolo9000} described in \cref{subsec:semantic-similarity} for deriving a tree-shaped hierarchy from WordNet.
For the evaluation, however, we used the full WordNet taxonomy for computing class similarities.

We employ the ResNet-50 architecture \cite{he2016resnet} with a batch size of 128 images but only use a single cycle of SGDR, \ie, just the cosine annealing without warm restarts, starting with a learning rate of $0.1$ and annealing it down to $10^{-6}$ over 80 epochs.
We used the same data augmentation techniques as He \etal \cite{he2016resnet} (flipping, scale augmentation, random cropping), except color augmentation, and use the weights provided by them as classification-based baseline to compare our method with.

\begin{figure*}[t]
    \includegraphics[width=\linewidth]{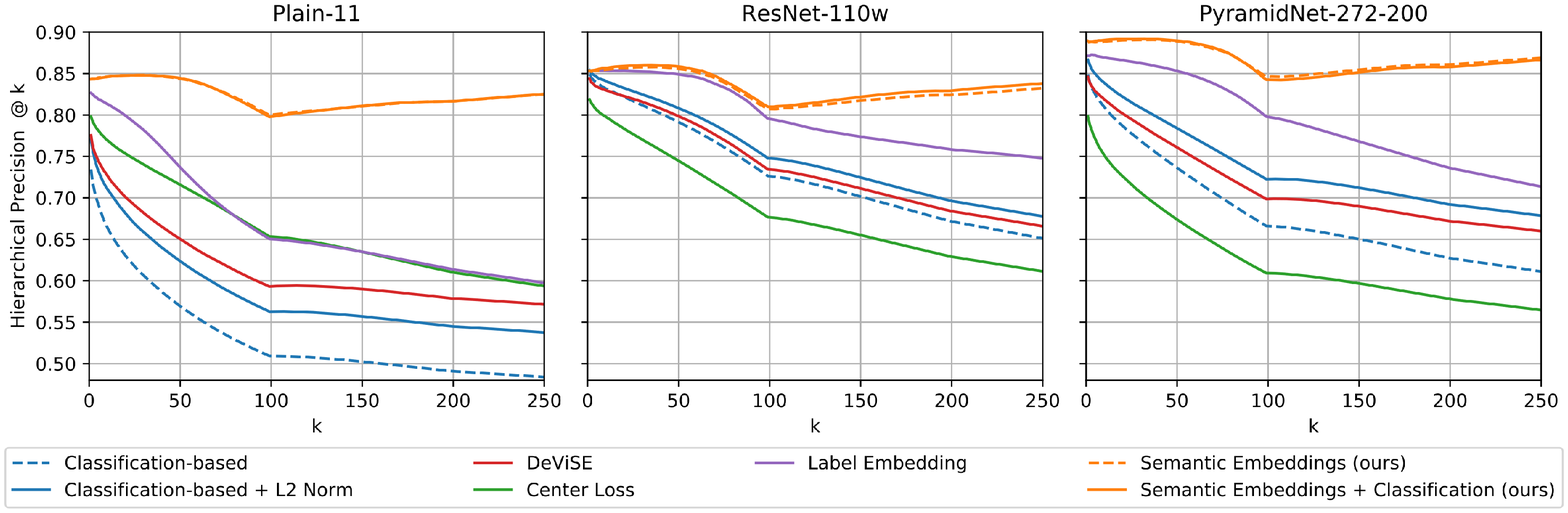}
    \caption{Hierarchical precision on CIFAR-100 at several cutoff points k for various network architectures.}
    \label{fig:hierarchical-precision}
\end{figure*}

\subsection{Performance Metrics}
\label{par:perf-metrics}

\noindent
Image retrieval tasks are often evaluated using the precision of the top k results (P@k) and mean average precision (mAP).
However, these metrics do not take similarities among classes and varying misclassification costs into account. 
Thus, we introduce variants of them that are aware of the semantic relationships among classes.

Let $x_q \in \mathfrak{X}$ be a query image belonging to class $y_q \in \mathcal{Y} = \{1, \dotsc, n\}$ and $R = \bigl((x_1, y_1), \dotsc, (x_m, y_m)\bigr)$ denote an ordered list of retrieved images $x_i \in \mathfrak{X}$ and their associated classes $y_i \in \mathcal{Y}$.
Following Deng et al. \cite{deng2011hierarchical}, we define the \textit{hierarchical precision at k (HP@k)} for $1 \leq k \leq m$ as
\begin{equation}
    \label{eq:hierarchical-precision}
    \mathrm{HP@k}(R) = \frac{
        \sum_{i=1}^k \semsim(y_q, y_i)
    }{
        \max_\pi \sum_{i=1}^k \semsim(y_q, y_{\pi_i})
    } \;,
\end{equation}
where $\pi$ denotes any permutation of the indices from $1$ to $m$. Thus, the denominator in \cref{eq:hierarchical-precision} normalizes the hierarchical precision through division by the sum of the precisions of the best possible ranking.

For several values of k, HP@k can be plotted as a curve for gaining a more detailed insight into the behavior of a retrieval system regarding the top few results (short-term performance) and the overall ranking (long-term performance).
We denote the area under that curve ranging from $k=1$ to $k=K$ as \textit{average hierarchical precision at $K$ (AHP@K)}.
It can be used to compare the semantic performance of image retrieval systems by means of a single number.
In the following, we always report the AHP@250, because we do not expect the typical user to inspect more than 250 results.

\subsection{Competitors}
\label{par:competitors}

\noindent
We evaluate two variants of our hierarchy-based semantic image embeddings: trained with $\mathcal{L}_\mathrm{CORR}$ only (eq.\ \cref{eq:corr-loss}) or trained with the combination of correlation and categorical cross-entropy loss, $\mathcal{L}_\mathrm{CORR+CLS}$, according to \cref{eq:embed-cls-loss}.

As a baseline, we compare them with features extracted from the last layer right before the final classification layer of the same network architecture, but trained solely for classification.
We also evaluate the performance of L2-normalized versions of these classification-based features, which are usually used for CBIR (\cf \cref{sec:introduction}).

Moreover, we compare with DeViSE \cite{frome2013devise}, the center loss \cite{wen2016centerloss}, and label embedding networks \cite{sun2017label}.
All methods have been applied to identical network architectures and trained with exactly the same optimization strategy, except DeViSE, which requires special training.

Since DeViSE \cite{frome2013devise} learns to map images onto word embeddings learned from text corpora, we were only able to apply it on CIFAR-100, since about one third of the 1,000 ILSVRC classes could not be matched to word embeddings automatically and the scientific bird names of the NAB dataset are not part of the vocabulary either.
For CIFAR-100, however, we used 100-dimensional pre-computed GloVe word embeddings \cite{pennington2014glove} learned from Wikipedia.

\subsection{Semantic Image Retrieval Performance}
\label{subsec:retrieval-performance}

\begin{table*}[t]
    \setlength\tabcolsep{1mm}
    \begin{tabularx}{\linewidth}{X @{\hspace{4mm}} ccc @{\hspace{4mm}} cc @{\hspace{4mm}} c}
        \toprule
        \multirow{2}{*}[-3pt]{Method} & \multicolumn{3}{c@{\hspace{4mm}}}{CIFAR-100} & \multicolumn{2}{c@{\hspace{4mm}}}{NAB} & \multirow{2}{*}[-3pt]{ILSVRC} \\
        \cmidrule(r{4mm}){2-4} \cmidrule(r{4mm}){5-6} & Plain-11 & ResNet-110w & PyramidNet & from scratch & fine-tuned & \\
        \midrule
        Classification-based & 0.5287 & 0.7261 & 0.6775 & 0.2538 & 0.5988 & 0.6831 \\
        Classification-based + L2 Norm & 0.5821 & 0.7468 & 0.7334 & 0.2696 & 0.6259 & 0.7132 \\
        DeViSE \cite{frome2013devise} & 0.6117 & 0.7348 & 0.7116 & --- & --- & --- \\
        Center Loss \cite{wen2016centerloss} & 0.6587 & 0.6815 & 0.6227 & 0.4185 & 0.4916 & 0.4094 \\
        Label Embedding \cite{sun2017label} & 0.6678 & 0.7950 & 0.7888 & 0.4251 & 0.5735 & 0.4769 \\
        \midrule
        Semantic Embeddings ($\mathcal{L}_\mathrm{CORR}$) [ours] & \textbf{0.8207} & 0.8290 & \textbf{0.8653} & 0.7157 & 0.7849 & 0.7902 \\
        Semantic Embeddings ($\mathcal{L}_\mathrm{CORR+CLS}$) [ours] & 0.8205 & \textbf{0.8329} & 0.8638 & \textbf{0.7399} & \textbf{0.8146} & \textbf{0.8242} \\
        \bottomrule
    \end{tabularx}
    \caption{Retrieval performance of different image features in mAHP@250. The best value per column is set in bold font.}
    \label{tbl:retrieval-performance}
\end{table*}

\begin{table*}[t]
    \setlength\tabcolsep{1mm}
    \begin{tabularx}{\linewidth}{X @{\hspace{4mm}} ccc @{\hspace{4mm}} cc @{\hspace{4mm}} c}
        \toprule
        \multirow{2}{*}[-3pt]{Method} & \multicolumn{3}{c@{\hspace{4mm}}}{CIFAR-100} & \multicolumn{2}{c@{\hspace{4mm}}}{NAB} & \multirow{2}{*}[-3pt]{ILSVRC} \\
        \cmidrule(r{4mm}){2-4} \cmidrule(r{4mm}){5-6} & Plain-11 & ResNet-110w & PyramidNet & from scratch & fine-tuned & \\
        \midrule
        Classification-based & 72.18\% & 76.95\% & \textbf{81.44\%} & 33.35\% & \textbf{70.17\%} & \textbf{73.87\%} \\
        DeViSE \cite{frome2013devise} & 69.24\% & 74.66\% & 77.32\% & --- & --- & --- \\
        Center Loss \cite{wen2016centerloss} & 73.27\% & 75.18\% & 76.83\% & 56.07\% & 61.72\% & 70.05\% \\
        Label Embedding \cite{sun2017label} & \textbf{74.38\%} & \textbf{76.96\%} & 79.35\% & 48.05\% & 61.87\% & 70.94\% \\
        \midrule
        Semantic Embeddings ($\mathcal{L}_\mathrm{CORR}$) [ours] & 73.99\% & 75.03\% & 79.87\% & 57.52\% & 63.05\% & 48.97\% \\
        Semantic Embeddings ($\mathcal{L}_\mathrm{CORR+CLS}$) [ours] & 74.10\% & 76.60\% & 80.49\% & \textbf{59.46\%} & 69.49\% & 69.18\% \\
        \bottomrule
    \end{tabularx}
    \caption{Balanced classification accuracy of various methods. The best value in every column is set in bold font.}
    \label{tbl:classification-performance}
\end{table*}

\noindent
For all datasets, we used each of the test images as individual query, aiming to retrieve semantically similar images from the remaining ones.
Retrieval is performed by ranking the images in the database decreasingly according to their dot product with the query image in the feature space.

The mean values of HP@k on CIFAR-100 over all queries at the first 250 values of k are reported in \cref{fig:hierarchical-precision}.
It can clearly be seen that our hierarchy-based semantic embeddings achieve a much higher hierarchical precision than all competing methods.
While the differences are rather small when considering only very few top-scoring results, our method maintains a much higher precision when taking more retrieved images into account. 

There is an interesting turning point in the precision curves at $k=100$, where only our method suddenly starts increasing hierarchical precision.
This is because there are exactly 100 test images per class, so that retrieving only images from exactly the same category as the query is not sufficient any more after this point.
Instead, a semantically-aware retrieval method should continue retrieving images from the most similar classes at the later positions in the ranking, of which only our method is capable.
It is possible for the HP@k to improve after this point because previously missed good matches can be retrieved that result in a larger increase of the nominator of \cref{eq:hierarchical-precision} than the denominator.

Additionally taking a classification objective into account during training improves the semantic retrieval performance on CIFAR-100 only slightly compared to using our simple $\mathcal{L}_\mathrm{CORR}$ loss function alone.
However, it is clearly beneficial on NAB and ILSVRC.
This can be seen from the mAHP@250 reported for all datasets in \cref{tbl:retrieval-performance}.
Our approach outperforms the second-best method for every dataset and network architecture by between 5\% and 23\% relative improvement of mAHP@250 on CIFAR-100, 36\%-74\% on NAB, and 16\% on ILSVRC.


We also found that our approach outperforms all competitors in terms of classical mAP, which does not take class similarities into account.
The detailed results as well as qualitative examples for ILSVRC are provided in \cref{app:further-results,app:ilsvrc-examples}.
Meanwhile, \cref{fig:retrieval_comparison} shows some qualitative results on CIFAR-100. 

\subsection{Classification Performance}
\label{subsec:classification-performance}

\noindent
While our method achieves superior performance in the scenario of content-based image retrieval, we also wanted to make sure that it does not sacrifice classification performance.
Though classification was not the objective of our work, we have hence also compared the balanced classification accuracy obtained by all tested methods on all datasets and show the results in \cref{tbl:classification-performance}.
It can be seen that learning to map images onto the semantic space of class embeddings does not impair classification performance unreasonably.

For DeViSE, which does not produce any class predictions, we have trained a linear SVM on top of the extracted features.
The same issue arises for hierarchy-based semantic embeddings trained with $\mathcal{L}_\mathrm{CORR}$, where we performed classification by assigning images to the class with the nearest embedding in the semantic feature space.
Though doing so gives fair results, the accuracy becomes more competitive when training with $\mathcal{L}_\mathrm{CORR+CLS}$ instead, \ie, a combination of the embedding and the classification objective.

In the case of the NAB dataset, our method even obtains the best accuracy when training from scratch.
Incorporating prior knowledge about class similarities obviously facilitates learning on a limited amount of training data.

\section{Conclusions and Future Work}
\label{sec:conclusions}

\noindent
We proposed a novel method for integrating basic prior knowledge about the semantic relationships between classes, given as class taxonomy, into deep learning.
Our hierarchy-based semantic embeddings preserve the semantic similarity of the classes in the joint space of image and class embeddings and thus allow for retrieving images from a database that are not only \textit{visually}, but also \textit{semantically} similar to a given query image.
This avoids unrelated matches and improves the quality of content-based image retrieval results significantly compared with other recent representation learning methods.

In contrast to other often used class representations such as text-based word embeddings, the hierarchy-based embedding space constructed by our method allows for straightforward training of neural networks by learning a regression of the class centroids using a simple loss function involving only a dot product.

The learned image features have also proven to be suitable for image classification, providing performance similar to that of networks trained explicitly for that task only.

Since the semantic target feature space is, by design, very specific to the classes present in the training set, generalization w.r.t.\ novel classes is still an issue.
It thus seems promising to investigate the use of activations at earlier layers in the network, which we expect to be more general.

\ifwacvfinal
\subsubsection*{Acknowledgements}

\begin{footnotesize}
\noindent
This work was supported by the German Research Foundation as part of the
priority programme ``Volunteered Geographic Information: Interpretation,
Visualisation and Social Computing'' (SPP 1894, contract number DE 735/11-1).
We also gratefully acknowledge the support of NVIDIA Corporation with the
donation of Titan Xp GPUs used for this research.\par
\end{footnotesize}
\fi

{\small
\bibliographystyle{ieee}
\bibliography{references}
}

\definecolor{Gray}{gray}{0.9}
\definecolor{LightGray}{gray}{0.96}
\definecolor{DarkGray}{gray}{0.3}

\definecolor{sim000}{RGB}{165,   0,  38}
\definecolor{sim005}{RGB}{190,  24,  38}
\definecolor{sim011}{RGB}{215,  49,  39}
\definecolor{sim016}{RGB}{231,  82,  54}
\definecolor{sim021}{RGB}{244, 114,  69}
\definecolor{sim026}{RGB}{249, 149,  85}
\definecolor{sim032}{RGB}{253, 180, 102}
\definecolor{sim037}{RGB}{253, 208, 125}
\definecolor{sim042}{RGB}{254, 230, 149}
\definecolor{sim047}{RGB}{254, 247, 177}
\definecolor{sim053}{RGB}{245, 250, 177}
\definecolor{sim058}{RGB}{224, 242, 149}
\definecolor{sim063}{RGB}{201, 232, 128}
\definecolor{sim068}{RGB}{173, 220, 110}
\definecolor{sim074}{RGB}{142, 206, 103}
\definecolor{sim079}{RGB}{107, 191,  99}
\definecolor{sim084}{RGB}{ 69, 173,  90}
\definecolor{sim090}{RGB}{ 27, 152,  80}
\definecolor{sim095}{RGB}{ 13, 128,  67}
\definecolor{sim100}{RGB}{  0, 104,  55}

\onecolumn
\begin{appendices}
\crefalias{section}{appsec}

\section{$\semdist$ applied to trees is a metric}
\label{app:metric-proof}

\begin{theorem}
    Let $\mathcal{G} = (V, E)$ be a directed acyclic graph whose egdes $E \subseteq V \times V$ define a hyponymy relation between the semantic concepts in $V$.
    Furthermore, let $\mathcal{G}$ have exactly one unique root node $\rootnode(\mathcal{G})$ with indegree $\deg^-(\rootnode(\mathcal{G})) = 0$.
    The lowest common subsumer $\lcs(u, v)$ of two concepts $u, v \in V$ hence always exists.
    Moreover, let $\treeheight(u)$ denote the maximum length of a path from $u \in V$ to a leaf node, and $\mathcal{C} \subseteq V$ be a set of classes of interest.
    
    Then, the semantic dissimilarity $\semdist: \mathcal{C} \times \mathcal{C} \rightarrow \mathbb{R}$ between classes given by
    \begin{equation}
    \semdist(u, v) = \frac{\treeheight(\lcs(u, v))}{\max_{w \in V} \treeheight(w)}
    \end{equation}
    is a proper metric if
    \begin{enumerate}[label=(\alph*), ref=(\alph*)]
        \item\label{item:tree} $\mathcal{G}$ is a tree, \ie, all nodes $u \in V \setminus \{\rootnode(\mathcal{G})\}$ have indegree $\deg^-(u) = 1$, and
        \item\label{item:leafs} all classes of interest are leaf nodes of the hierarchy, \ie, all $u \in \mathcal{C}$ have outdegree $\deg^+(u) = 0$.
    \end{enumerate}
\end{theorem}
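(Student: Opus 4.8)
The plan is to verify the four defining properties of a metric on $\mathcal{C} \times \mathcal{C}$: non-negativity, symmetry, the identity of indiscernibles, and the triangle inequality. Writing $H := \max_{w \in V} \treeheight(w) = \treeheight(\rootnode(\mathcal{G}))$ for the (positive, constant) denominator, it suffices to study the integer-valued quantity $d(u,v) := \treeheight(\lcs(u,v))$, since $\semdist = d/H$. Non-negativity is immediate because heights are non-negative path lengths, and symmetry follows at once from the symmetry of the lowest common subsumer, $\lcs(u,v) = \lcs(v,u)$.

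For the identity of indiscernibles I would invoke assumption \ref{item:leafs}. If $u = v$ then $\lcs(u,u) = u$, and since $u$ is a leaf, $\treeheight(u) = 0$, giving $\semdist(u,u) = 0$. Conversely, if $u \neq v$ are two distinct leaves, then their LCS is a common ancestor lying strictly above both, i.e.\ a proper ancestor. Here I would record a monotonicity fact used repeatedly below: whenever $x$ is a proper ancestor of $y$ in a tree, one has $\treeheight(x) \geq \mathrm{dist}(x,y) + \treeheight(y) \geq 1 + \treeheight(y)$, so in particular $d(u,v) = \treeheight(\lcs(u,v)) \geq 1 > 0$ and hence $\semdist(u,v) > 0$. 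This is exactly the step where the leaf assumption is essential: were a class an interior node, its self-dissimilarity could be strictly positive.

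The main obstacle is the triangle inequality, which I would establish in the stronger ultrametric form $\semdist(u,w) \leq \max\bigl(\semdist(u,v), \semdist(v,w)\bigr)$; the ordinary inequality then follows since $\max(a,b) \leq a+b$ for non-negative $a,b$. The key is a structural lemma about any three leaves $u, v, w$: let $a$ be their common ancestor of greatest depth. The three downward paths from $a$ to $u$, $v$, and $w$ leave $a$ through its children in one of two configurations. Either all three enter distinct children of $a$, in which case $\lcs(u,v) = \lcs(v,w) = \lcs(u,w) = a$ and the three dissimilarities coincide; or two of the paths, say those to $u$ and $v$, share a common child of $a$ while the third is separated, in which case $\lcs(u,w) = \lcs(v,w) = a$ whereas $\lcs(u,v)$ is a proper descendant of $a$, so the monotonicity fact gives $d(u,v) < d(u,w) = d(v,w)$. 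In both configurations the two largest among $d(u,v), d(v,w), d(u,w)$ are equal, which is precisely the ultrametric inequality.

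Throughout, assumption \ref{item:tree} (that $\mathcal{G}$ is a tree) is what makes the LCS unique and well-defined and renders the ancestor relation a forest order, so that the ``branch at the deepest common ancestor'' argument and the height-monotonicity step are both valid. In a general DAG a node may have several parents and both steps fail, which matches the golfcart counterexample mentioned in \cref{subsec:semantic-similarity}.
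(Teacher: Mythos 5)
Your proof is correct and takes essentially the same approach as the paper's: non-negativity and symmetry are immediate, assumption (b) yields the identity of indiscernibles, and assumption (a) yields the triangle inequality by way of the stronger ultrametric inequality $\semdist(u,w) \leq \max\{\semdist(u,v), \semdist(v,w)\}$. The only cosmetic difference lies in how the ultrametric step is derived: you branch on which children of the deepest common ancestor of all three leaves the downward paths enter, whereas the paper observes that $\lcs(u,v)$ and $\lcs(v,w)$ both lie on the root-path of $v$ and are therefore comparable, the higher of the two then subsuming all three nodes.
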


\begin{proof}
    For being a proper metric, $\semdist$ must possess the following properties:
    \begin{enumerate}[label=(\roman*), ref=(\roman*)]
        \item\label{item:nonneg} Non-negativity: $\semdist(u, v) \geq 0$.
        \item\label{item:sym} Symmetry: $\semdist(u, v) = \semdist(v, u)$.
        \item\label{item:ident} Identity of indiscernibles: $\semdist(u, v) = 0 \Leftrightarrow u = v$.
        \item\label{item:tria} Triangle inequality: $\semdist(u, w) \leq \semdist(u, v) + \semdist(v, w)$.
    \end{enumerate}
    
    The conditions \ref{item:nonneg} and \ref{item:sym} are always satisfied since $\treeheight: V \rightarrow \mathbb{R}$ is defined as the length of a path, which cannot be negative, and the lowest common subsumer (LCS) of two nodes is independent of the order of arguments.
    
    The proof with respect to the remaining properties \ref{item:ident} and \ref{item:tria} can be conducted as follows:
    
    \begin{description}[itemsep=1em]
        
        \item[\ref{item:leafs}$\rightarrow$\ref{item:ident}]
        Let $u, v \in \mathcal{C}$ be two classes with $\semdist(u, v) = 0$. This means that their LCS has height 0 and hence must be a leaf node. Because leaf nodes have, by definition, no further children, $u = \lcs(u, v) = v$.
        On the other hand, for any class $w \in \mathcal{C}$, $\semdist(w, w) = 0$ because $\lcs(w, w) = w$ and $w$ is a leaf node according to \ref{item:leafs}.
        
        \item[\ref{item:tree}$\rightarrow$\ref{item:tria}]
        Let $u,v,w \in \mathcal{C}$ be three classes.
        Due to \ref{item:tree}, there exists exactly one unique path from the root of the hierarchy to any node.
        Hence, $\lcs(u,v)$ and $\lcs(v,w)$ both lie on the path from $\rootnode(\mathcal{G})$ to $v$ and they are, thus, either identical or one is an ancestor of the other.
        Without loss of generality, we assume that $\lcs(u,v)$ is an ancestor of $\lcs(v,w)$ and thus lies on the root-paths to $u$, $v$, and $w$.
        In particular, $\lcs(u,v)$ is a subsumer of $u$ and $w$ and, therefore, $\treeheight(\lcs(u,w)) \leq \treeheight(\lcs(u,v))$.
        In general, it follows that $\semdist(u,w) \leq \max\{\semdist(u,v), \semdist(v,w)\} \leq \semdist(u,v) + \semdist(v,w)$, given the non-negativity of $\semdist$.
        
    \end{description}
    
\end{proof}

\subsection*{Remark regarding the inversion}

If $\semdist$ is a metric, all classes $u \in \mathcal{C}$ of interest must necessarily be leaf nodes, since $\semdist(u, u) = 0 \Rightarrow \treeheight(\lcs(u, u)) = \treeheight(u) = 0$.

However, \ref{item:tria}$\rightarrow$\ref{item:tree} does not hold in general, since $\semdist$ may even be a metric for graphs $\mathcal{G}$ that are not trees.
An example is given in \cref{subfig:nontree-metric}.
Nevertheless, most such graphs violate the triangle inequality, like the example shown in \cref{subfig:nontree-nonmetric}.

\begin{figure}
    \begin{subfigure}{.48\linewidth}
        \centering
        \includegraphics[height=7cm]{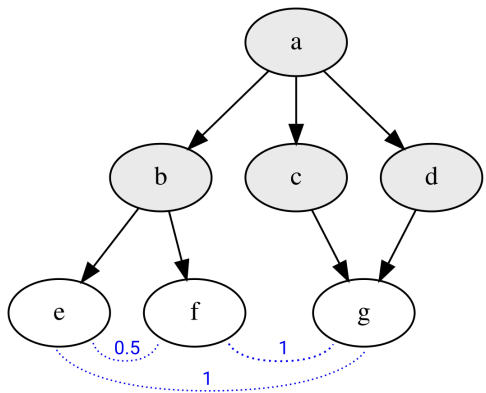}
        \caption{A non-tree hierarchy where $\semdist$ is a metric.}
        \label{subfig:nontree-metric}
    \end{subfigure}\hfil%
    \begin{subfigure}{.48\linewidth}
        \centering
        \includegraphics[height=7cm]{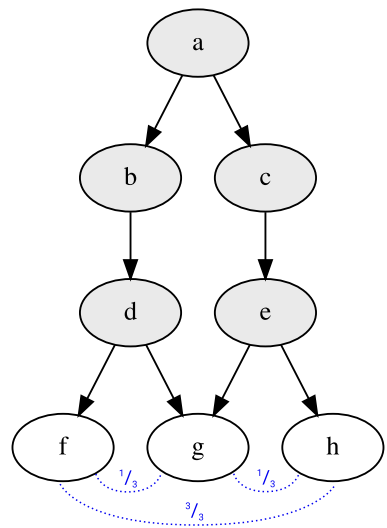}
        \caption{A non-tree hierarchy where $\semdist$ violates the triangle inequality.}
        \label{subfig:nontree-nonmetric}
    \end{subfigure}
    \caption{Examples for non-tree hierarchies.}
    \label{fig:nontree}
\end{figure}

\clearpage

\section{Further Quantitative Results}
\label{app:further-results}

\begin{figure}[h]
    \centering
    \includegraphics[width=\linewidth]{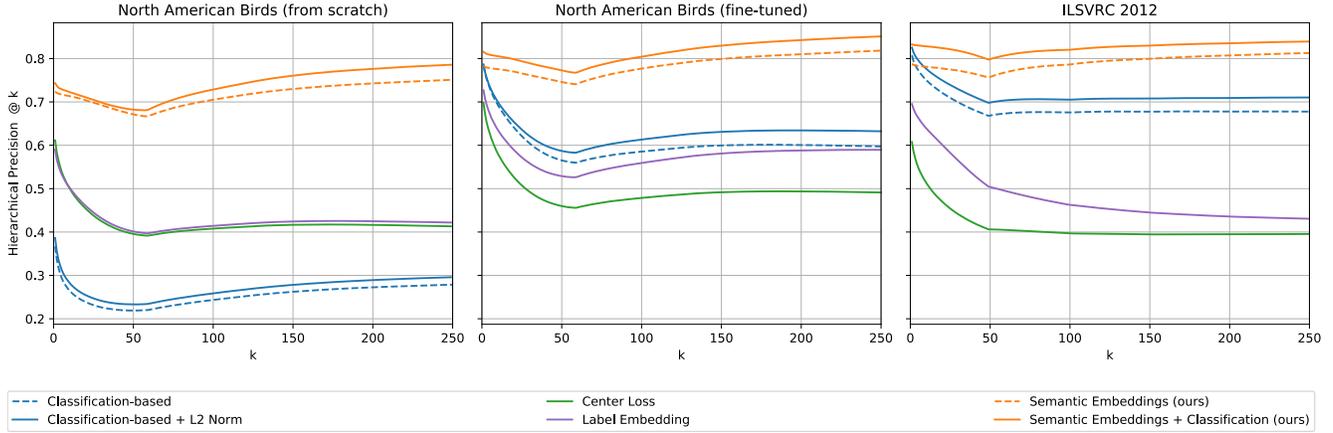}
    \caption{Hierarchical precision on NAB and ILSVRC 2012.}
    \label{fig:nab-ilsvrc-performance}
\end{figure}

\begin{table*}[h]
    \setlength\tabcolsep{1mm}
    \begin{tabularx}{\linewidth}{X @{\hspace{4mm}} ccc @{\hspace{4mm}} cc @{\hspace{4mm}} c}
        \toprule
        \multirow{2}{*}[-3pt]{Method} & \multicolumn{3}{c@{\hspace{4mm}}}{CIFAR-100} & \multicolumn{2}{c@{\hspace{4mm}}}{NAB} & \multirow{2}{*}[-3pt]{ILSVRC} \\
        \cmidrule(r{4mm}){2-4} \cmidrule(r{4mm}){5-6} & Plain-11 & ResNet-110w & PyramidNet & from scratch & fine-tuned & \\
        \midrule
        Classification-based & 0.2078 & 0.4870 & 0.3643 & 0.0283 & 0.2771 & 0.2184 \\
        Classification-based + L2 Norm & 0.2666 & 0.5305 & 0.4621 & 0.0363 & 0.3194 & 0.2900 \\
        DeViSE & 0.2879 & 0.5016 & 0.4131 & --- & --- & --- \\
        Center Loss & 0.4180 & 0.4153 & 0.3029 & 0.1591 & 0.1802 & 0.1285 \\
        Label Embedding & 0.2747 & \textbf{0.6202} & 0.5920 & 0.1271 & 0.2417 & 0.2683 \\
        \midrule
        Semantic Embeddings ($\mathcal{L}_\mathrm{CORR}$) [ours] & 0.5660 & 0.5900 & 0.6642 & 0.4249 & 0.5246 & 0.3037 \\
        Semantic Embeddings ($\mathcal{L}_\mathrm{CORR+CLS}$) [ours] & \textbf{0.5886} & 0.6107 & \textbf{0.6808} & \textbf{0.4316} & \textbf{0.5768} & \textbf{0.4508} \\
        \bottomrule
    \end{tabularx}
    \caption{Classical mean average precision (mAP) on all datasets. The best value per column is set in bold font. Obviously, optimizing for a classification criterion only leads to sub-optimal features for image retrieval.}
    \label{tbl:classical-retrieval-performance}
\end{table*}

\clearpage

\section{Qualitative Results on ILSVRC 2012}
\label{app:ilsvrc-examples}

\begin{figure}[h]
    \includegraphics[width=\linewidth]{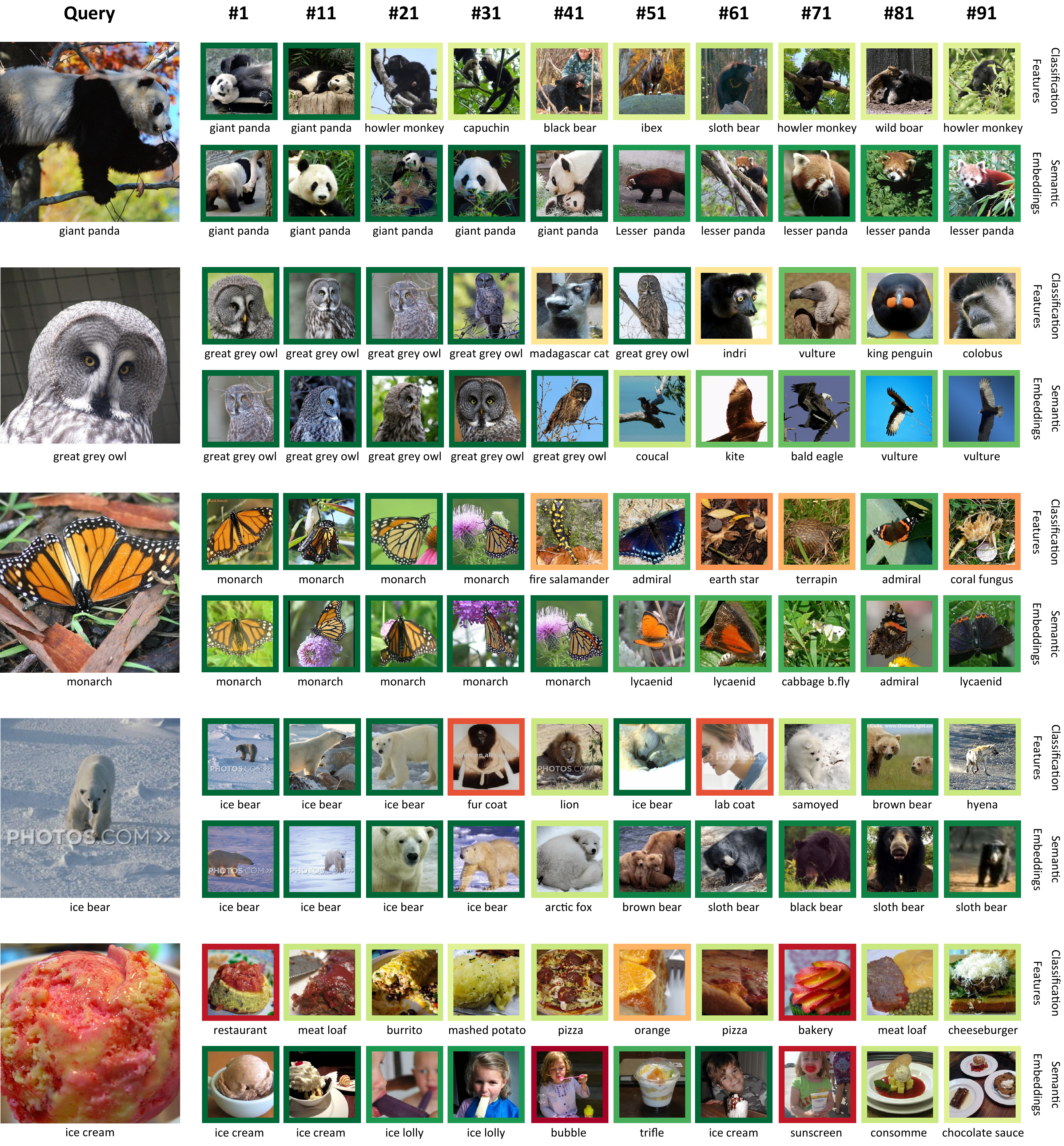}
    \caption{Comparison of a subset of the top 100 retrieval results using L2-normalized classification-based and our hierarchy-based semantic features for 3 exemplary queries on ILSVRC 2012. Image captions specify the ground-truth classes of the images and the border color encodes the semantic similarity of that class to the class of the query image, with dark green being most similar and dark red being most dissimilar.}
    \label{fig:imagenet-examples}
\end{figure}

\begin{figure}[p]
    \renewcommand{\arraystretch}{1.2}
    \setlength\tabcolsep{4pt}
    \begin{tabularx}{\linewidth}{ >{\columncolor{LightGray}}c >{\columncolor{Gray}}l >{\columncolor{Gray}\color{DarkGray}\scriptsize}r >{\columncolor{LightGray}}X >{\columncolor{LightGray}\color{DarkGray}\scriptsize}r }
        \textbf{Image} & \multicolumn{2}{>{\columncolor{Gray}}c}{\textbf{Classification-based}} & \multicolumn{2}{>{\columncolor{LightGray}}c}{\textbf{Semantic Embeddings (ours)}} \\
        \addlinespace
        
        & 1. \underline{Giant Panda} & (1.00) & 1. \underline{Giant Panda} & (1.00) \\
        & 2. American Black Bear & (0.63) & 2. Lesser Panda & (0.89) \\
        & 3. Ice Bear & (0.63) & 3. Colobus & (0.58) \\
        & 4. Gorilla & (0.58) & 4.  American Black Bear & (0.63) \\
        \multirow{-5}{*}{\includegraphics[height=2.31cm]{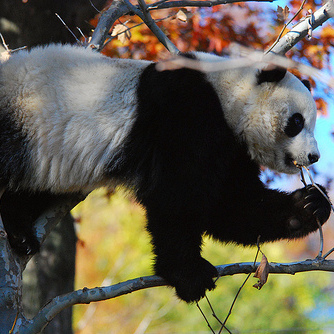}}
        & 5. Sloth Bear & (0.63) & 5. Guenon & (0.58) \\
        
        \addlinespace
        
        & 1. \underline{Great Grey Owl} & (1.00) & 1. \underline{Great Grey Owl} & (1.00) \\
        & 2. Sweatshirt & (0.16) & 2. Kite & (0.79) \\
        & 3. Bonnet & (0.16) & 3. Bald Eagle & (0.79) \\
        & 4. Guenon & (0.42) & 4. Vulture & (0.79) \\
        \multirow{-5}{*}{\includegraphics[height=2.31cm]{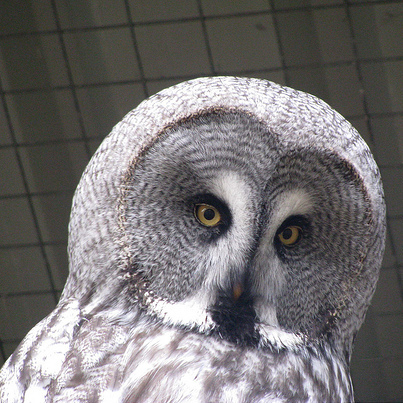}}
        & 5. African Grey & (0.63) & 5. Ruffed Grouse & (0.63) \\
        
        \addlinespace
        
        & 1. \underline{Monarch} & (1.00) & 1. \underline{Monarch} & (1.00) \\
        & 2. Earthstar & (0.26) & 2. Cabbage Butterfly & (0.84) \\
        & 3. Coral Fungus & (0.26) & 3. Admiral & (0.84) \\
        & 4. Stinkhorn & (0.26) & 4. Sulphur Butterfly & (0.84) \\
        \multirow{-5}{*}{\includegraphics[height=2.31cm]{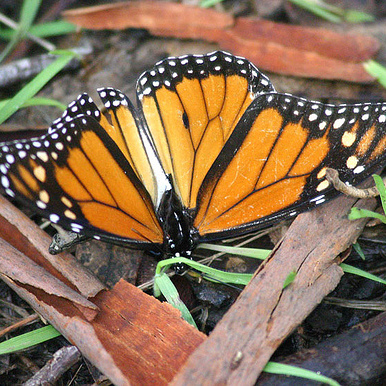}}
        & 5. Admiral & (0.84) & 5. Lycaenid & (0.84) \\
        
        \addlinespace
        
        & 1. \underline{Ice Bear} & (1.00) & 1. \underline{Ice Bear} & (1.00) \\
        & 2. Arctic Fox & (0.63) & 2. Brown Bear & (0.95) \\
        & 3. White Wolf & (0.63) & 3. Sloth Bear & (0.95) \\
        & 4. Samoyed & (0.63) & 4. Arctic Fox & (0.63) \\
        \multirow{-5}{*}{\includegraphics[height=2.31cm]{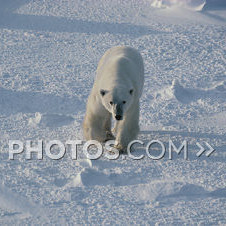}}
        & 5. Great Pyrenees & (0.63) & 5. American Black Bear & (0.95) \\
        
        \addlinespace
        
        & 1. \underline{Ice Cream} & (1.00) & 1. \underline{Ice Cream} & (1.00) \\
        & 2. Meat Loaf & (0.63) & 2. Ice Lolly & (0.84) \\
        & 3. Bakery & (0.05) & 3. Trifle & (0.89) \\
        & 4. Strawberry & (0.32) & 4. Chocolate Sauce & (0.58) \\
        \multirow{-5}{*}{\includegraphics[height=2.31cm]{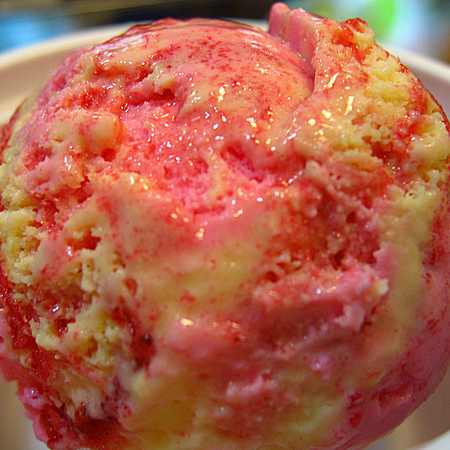}}
        & 5. Fig & (0.32) & 5. Plate & (0.79) \\
        
        \addlinespace
        
        & 1. \underline{Cocker Spaniel} & (1.00) & 1. \underline{Cocker Spaniel} & (1.00) \\
        & 2. Irish Setter & (0.84) & 2. Sussex Spaniel & (0.89) \\
        & 3. Sussex Spaniel & (0.89) & 3. Irish Setter & (0.84) \\
        & 4. Australien Terrier & (0.79) & 4. Welsh Springer Spaniel & (0.89) \\
        \multirow{-5}{*}{\includegraphics[height=2.31cm]{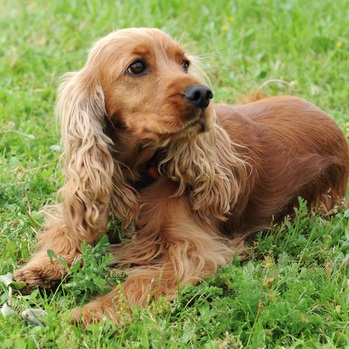}}
        & 5. Clumber & (0.89) & 5. Golden Retriever & (0.84) \\
    \end{tabularx}
    \caption{Top 5 classes predicted for several example images by a ResNet-50 trained purely for classification and by our network trained with $\mathcal{L}_\mathrm{CORR+CLS}$ incorporating semantic information. The correct label for each image is underlined and the numbers in parentheses specify the semantic similarity of the predicted class and the correct class. It can be seen that class predictions made based on our hierarchy-based semantic embeddings are much more relevant and consistent.}
    \label{fig:imagenet-class-predictions}
\end{figure}

\clearpage

\section{Low-dimensional Semantic Embeddings}
\label{app:low-dimensional}

\begin{figure}[h]
    \centering
    \includegraphics[width=\linewidth]{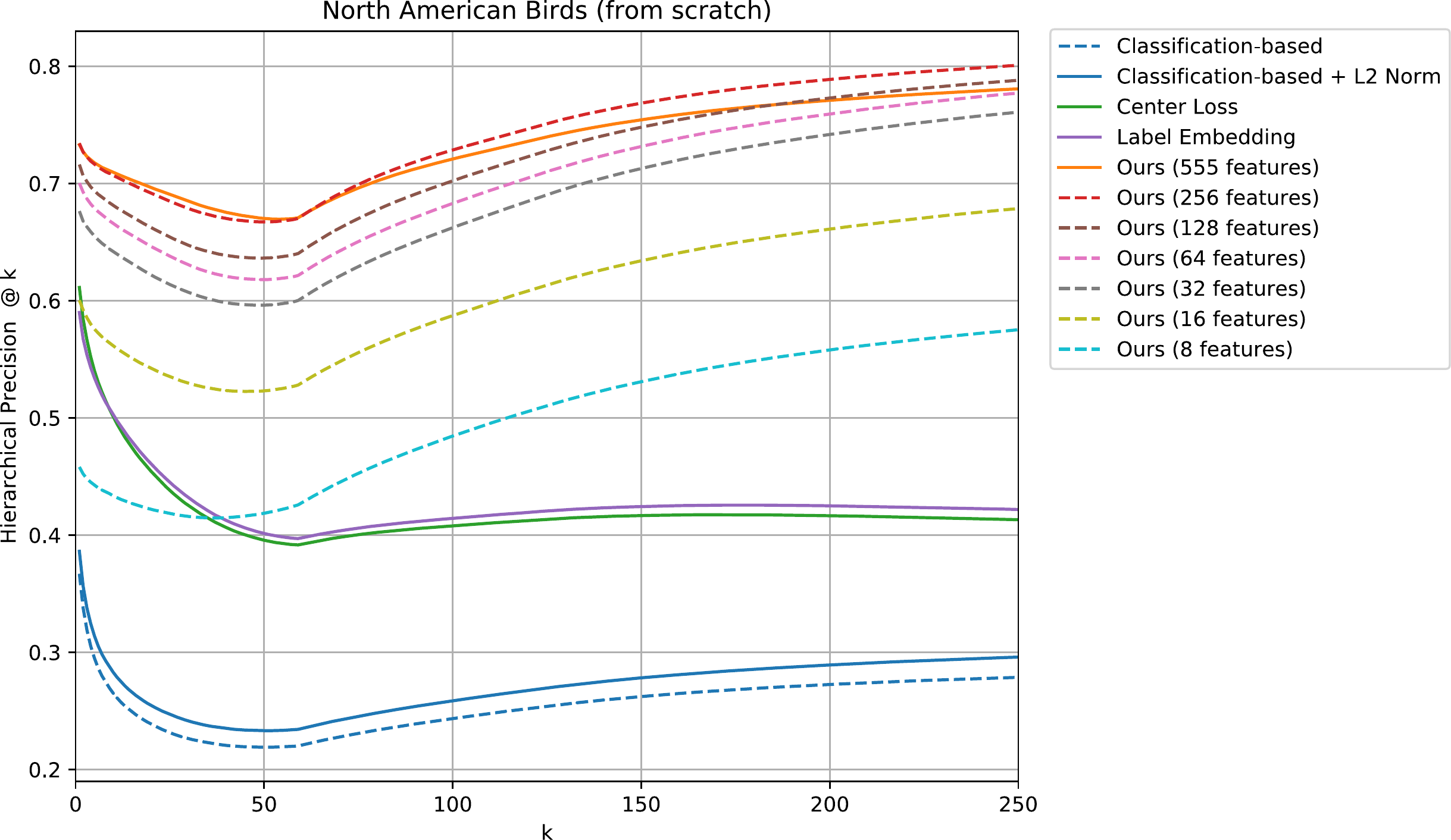}
    \caption{Hierarchical precision of our method for learning image representations based on class embeddings with varying dimensionality, compared with the usual baselines.}
    \label{fig:nab-embedding-dimensionality}
\end{figure}

As can be seen from the description of our algorithm for computing class embeddings in \cref{subsec:embedding-algorithm}, an embedding space with $n$ dimensions is required in general to find an embedding for $n$ classes that reproduces their semantic similarities exactly.
This can become problematic in settings with a high number of classes.

For such scenarios, we have proposed a method for computing low-dimensional embeddings of arbitrary dimensionality approximating the actual relationships among classes in \cref{subsec:low-dimensional-embeddings}.
paper.
We experimented with this possibility on the NAB dataset, learned from scratch, to see how reducing the number of features affects our algorithm for learning image representations and the semantic retrieval performance.

The results in \cref{fig:nab-embedding-dimensionality} show that obtaining low-dimensional class embeddings through eigendecomposition is a viable option for settings with a high number of classes.
Though the performance is worse than with the full amount of required features, our method still performs better than the competitors with as few as 16 features.
Our approach hence also allows obtaining very compact image descriptors, which is important when dealing with huge datasets.

Interestingly, the 256-dimensional approximation even gives slightly better results than the full embedding after the first 50 retrieved images.
We attribute this to the fact that fewer features leave less room for overfitting, so that slightly lower-dimensional embeddings can generalize better in this scenario.

\clearpage

\section{Taxonomy used for CIFAR-100}
\label{app:cifar-hierarchy}

\begin{center}
    \includegraphics[height=21cm]{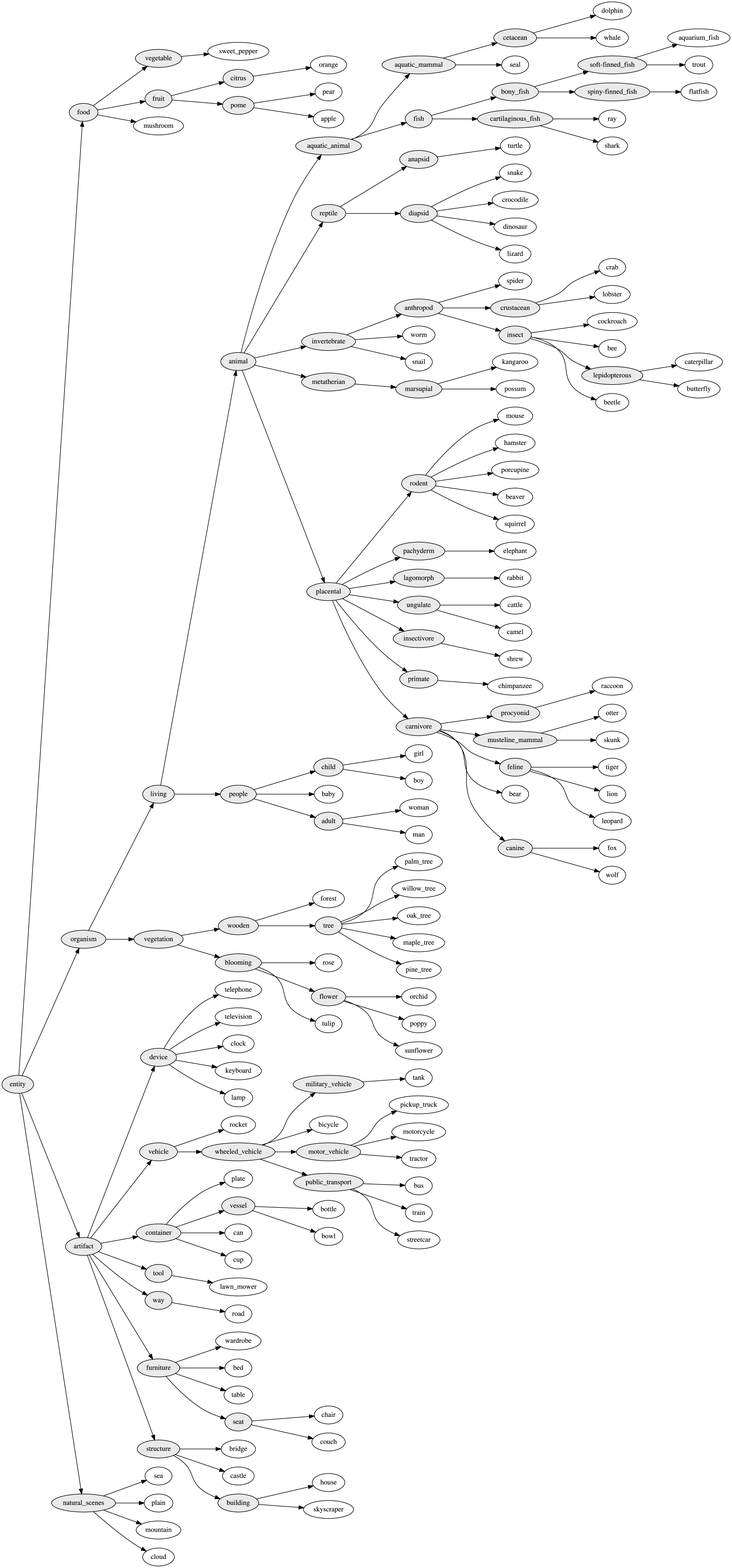}
\end{center}

\end{appendices}

\end{document}